\newtheorem{lemma}{Lemma}
\newtheorem{proposition}{Proposition}
\DeclareMathOperator*{\argmin}{arg\,min}
\begin{document}    

\title{\LARGE \bf Efficient Gradient-Based Inference for Manipulation Planning \\ in Contact Factor Graphs}

\author{Jeongmin Lee, Sunkyung Park, Minji Lee, and Dongjun Lee
\thanks{This research was supported by the National Research Foundation (NRF) grant funded by the Ministry of Science and ICT (MSIT) of Korea (RS-2022-00144468), and the Ministry of Trade, Industry \& Energy (MOTIE) of Korea (RS-2024-00419641). Corresponding author: Dongjun Lee.
}
\thanks{The authors are with the Department of Mechanical Engineering, IAMD and IOER, Seoul National University, Seoul, Republic of Korea.
\{ljmlgh,sunk1136,mingg8,djlee\}@snu.ac.kr. }
}

\maketitle

\begin{abstract}
This paper presents a framework designed to tackle a range of planning problems arise in manipulation, which typically involve complex geometric-physical reasoning related to contact and dynamic constraints. 
We introduce the Contact Factor Graph (CFG) to graphically model these diverse factors, enabling us to perform inference on the graphs to approximate the distribution and sample appropriate solutions. 
We propose a novel approach that can incorporate various phenomena of contact manipulation as differentiable factors, and develop an efficient inference algorithm for CFG that leverages this differentiability along with the conditional probabilities arising from the structured nature of contact.
Our results demonstrate the capability of our framework in generating viable samples and approximating posterior distributions for various manipulation scenarios.
\end{abstract}

\IEEEpeerreviewmaketitle

\section{Introduction}

Manipulation planning is a crucial and essential problem for robots to achieve their task goals. Given that typical manipulation tasks are executed through contact, the problem often arises various range of inverse problems over contact mechanics and structural dynamics, mostly requiring geometric-physical reasoning. 

This reasoning in manipulation planning typically involves considering a variety of factors, resulting in a non-convex, multi-modal search space.
One natural perspective for addressing these problems is to perform inference on the posterior distribution of the variables \cite{botvinick2012planning,levine2018reinforcement}. 
By framing the problem this way, one can apply diverse probabilistic inference methods, such as optimization, Markov Chain Monte Carlo (MCMC) or variational inference, to explore the posterior distribution and identify the most probable solutions. 
This approach has shown successful demonstrations in recent work on motion planning \cite{mukadam2018continuous,urain2022learning,yu2023gaussian}. 
However, in manipulation planning, the inherent complexity of contact interactions presents various challenges, such as creating a tractable formulation, developing scalable inference algorithms, and generating data using sampling methods. Consequently, finding an efficient way to seek a diverse set of solutions within the distribution remains elusive.

Given this context, in this work, we introduce a new modeling and inference algorithm to address diverse reasoning arise in manipulation planning, leveraging the differentiable representation and inherent structure of the contact mechanics. 
We first propose a formulation of the problem through a graphical model, termed Contact Factor Graphs (CFG). This model effectively organizes the relationships between object poses and contact forces, incorporating a variety of contact and dynamic factors. We describe how each factor can be formulated in \textit{differentiable} manner and demonstrate that various reasoning problems for manipulation can be represented within this framework.
Then, we introduce a novel gradient-based inference within the CFG. By capitalizing on the structured nature of the contact factors, we explain the advantages that can be obtained when using conditional probabilities in the inference process. We demonstrate the derivation of an efficient score function by exploiting the differentiability of the factors and the envelope theorem. 
The proposed framework is demonstrated to be capable of effective sample generation and approximation of the posterior probability distribution through diverse examples.

\section{Related Works} 

\subsection{Planning as Inference}

Formulating and solving problems using graphical models and inference provides a structured approach with a stochastic perspective. For instance, \cite{levine2018reinforcement} offers an overview of the inference perspective in control problems, while \cite{dellaert2021factor} elaborates on the utility of factor graphs in various robotic applications. Addressing the continuous motion planning challenge, \cite{mukadam2018continuous} advocates for leveraging this perspective through factor graph modeling, accompanied by maximum-a-posteriori (MAP) inference via optimization techniques. Uraín et al. \cite{urain2023composable} construct a reactive policy by sampling from a multimodal posterior distribution using the cross-entropy method.
The effectiveness of Bayesian inference in trajectory optimization is demonstrated in \cite{lambert2020stein,power2023constrained}.
In \cite{yang2023compositional}, geometric and physical constraint satisfaction problems are modeled as graphs and addressed through diffusion-based inference techniques. One advantage of such graphical models is their compositional nature. Our graphical model also employs contact-related constraints in a compositional manner but places greater emphasis on a model-based, differentiable energy function.

\subsection{Problems with Contact}

For problems involving contact, the inference approach introduces challenges as they involve constraints not only on the planning state but also on the contact impulse as nonlinear complementarity \cite{lee2022large}. 
This may necessitate searching over hybrid modes \cite{trinkle1991framework}, such as mixed-integer programming \cite{deits2019lvis}, which is typically computationally expensive. From an optimization perspective, contact-implicit constraints can make the problem mode-invariant, as demonstrated in \cite{mordatch2012contact,posa2014direct,howell2022calipso}. 
Recent works have focused more on improving the tractability and utility of the formulations. Some reflections on a ``good'' model for physical reasoning are discussed in \cite{toussaint2020describing}. 
This is also related to the recent emergence of differentiable simulators \cite{geilinger2020add,werling2021fast,howell2022dojo,lee2023differentiable}, which can provide gradients to help solve inverse problems involving contact physics. Several studies have demonstrated their efficacy compared to black-box sampling and inference methods commonly used in data generation processes and deep reinforcement learning \cite{turpin2022grasp,pang2023global,antonova2022rethinking}. 
Our work aligns with this trend, yet, instead of directly differentiating the simulation, we propose to form a factor graph with differentiable contact factors, which offers a broader perspective on modeling and facilitates tailored gradient-based inference.

\section{Modeling Contact Factor Graphs} \label{sec:modeling}

\subsection{Problem Formulation}

In this work, we primarily address the challenge of generating solution sets for geometric and physical constraints satisfaction encountered in contact-based tasks, assuming that the task skeleton is provided by high-level discrete search.
Also we base our analysis on given geometric and physical parameters, focusing on the variable $X = \left\{ q, u, \lambda \right\}$, where $q$, $u$, and $\lambda$ stand for the configuration space, external input, and contact force, respectively.
The constraints among these variables are modeled as factors, which can be visualized through a graphical model. In the following section, we detail the types of factors we primarily consider. 

\subsection{Contact Mechanics}
Incorporating contact constraints into the factor graph is essential and has significant implications. Here, we present a range of constraints involved in contact mechanics. 
One fundamental property of contact is its unilateral nature, resulting in the following constraints:
\begin{align} \label{eq:contact_unilateral}
\begin{split}
    &g(q) \ge 0 \quad \text{(non-penetration)} \\
    &g(q)\lambda = 0 \quad \text{(complementarity)}
\end{split}
\end{align}
where $g(q)\in\mathbb{R}$ is the gap function between objects.
Another crucial aspect of contact mechanics is the Coulomb friction law, which we precisely define as follows:
\begin{align} \label{eq:contact_coulomb}
\mathcal{C}_{n(q)} \ni
\mu\lambda_n + \lambda_t
\perp
\beta n(q) + v_t
\in \mathcal{C}_{n(q)} \quad \text{(Coulomb)}
\end{align}
Here, the subscript $n,t$ denote normal/tangential, $\mu \in \mathbb{R}^+$ is the friction coefficient, $\beta \in \mathbb{R}$ is an auxiliary variable, $n(q)$ is the contact normal, $\mathcal{C}_{n(q)}$ is a second-order cone with $n(q)$ as its axis.
The normal and tangential components of $\lambda$ are expressed as:
\begin{align*}
&\lambda_n = n(q)n(q)^T\lambda \quad \lambda_t = (I - n(q)n(q)^T)\lambda 
\end{align*}
Also the contact Jacobian $J_c$ and the tangential velocity $v_t$ are defined as\footnote{In this work, we express $\lambda$ and $J_c$ with respect to global coordinates. This obviates the need to explicitly specify the contact tangential coordinates, making it easier in computation and differentiation.}
\begin{align*}
&J_c = \begin{bmatrix}
    I & -[c(q)]_\times
\end{bmatrix} \\
&v_t = (I - n(q)n(q)^T)J_c\delta q 
\end{align*}
where $c(q)$ is the contact point and $[\cdot]_\times$ denotes the skew matrix. 
In the proposition below, we provide a rationale of the condition specified  \eqref{eq:contact_coulomb}:
\begin{proposition}
    The condition \eqref{eq:contact_coulomb} is equivalent to the Karush-Kuhn-Tucker conditions derived from the principle of maximal dissipation \cite{macklin2019nonsmooth}:
    \begin{align} \label{eq:coulomb_kkt}
    \begin{split}
        & 0 \le \| v_t \| \perp \mu\lambda_{n} - \| \lambda_{t} \| \ge 0 \\
        &\| v_t\|\lambda_{t} + \mu\lambda_{n} v_t = 0.
    \end{split}
    \end{align}
    while we slightly abuse notation here by denoting $\lambda_n$ as $n(q)^T\lambda$.
\end{proposition}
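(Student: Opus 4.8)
The plan is to prove the equivalence by peeling each second-order-cone condition into its normal and tangential parts and then showing that the single orthogonality constraint in \eqref{eq:contact_coulomb} collapses to a scalar identity that can be matched term-by-term against the maximal-dissipation KKT system \eqref{eq:coulomb_kkt}. Concretely, writing $\mathcal{C}_{n(q)} = \{z : \norm{(I - n(q)n(q)^T)z} \le n(q)^Tz\}$ (the self-dual Lorentz cone with axis $n(q)$), I would set $x := \mu\lambda_n + \lambda_t$ and $y := \beta n(q) + v_t$. Since $n(q)^T\lambda_t = 0$ and $n(q)^Tv_t = 0$ by construction, the normal part of $x$ is $\mu(n(q)^T\lambda)\,n(q)$ and its tangential part is $\lambda_t$, so $x \in \mathcal{C}_{n(q)}$ is exactly $\mu\lambda_n - \norm{\lambda_t} \ge 0$ (the friction-cone inequality in \eqref{eq:coulomb_kkt}), while $y \in \mathcal{C}_{n(q)}$ is $\beta \ge \norm{v_t} \ge 0$. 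Expanding $x^Ty$ and using $(I-n(q)n(q)^T)n(q)=0$, $n(q)^Tv_t = 0$, $\norm{n(q)}=1$, every cross term vanishes and $x \perp y$ reduces to the scalar equation $\mu\beta\lambda_n + \lambda_t^Tv_t = 0$. Thus \eqref{eq:contact_coulomb} is equivalent to the system $\mu\lambda_n \ge \norm{\lambda_t}$, $\beta \ge \norm{v_t} \ge 0$, $\mu\beta\lambda_n + \lambda_t^Tv_t = 0$ in the unknowns $(\lambda, v_t, \beta)$.

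For the forward direction I would run the chain $\mu\beta\lambda_n = -\lambda_t^Tv_t \le \norm{\lambda_t}\,\norm{v_t} \le (\mu\lambda_n)\norm{v_t} \le (\mu\lambda_n)\beta$. When $\lambda_n > 0$ this forces equality throughout, yielding (i) $\beta = \norm{v_t}$, (ii) equality in Cauchy–Schwarz, i.e. $\lambda_t$ and $v_t$ are antiparallel, so $\norm{v_t}\lambda_t = -\norm{\lambda_t}v_t$, and (iii) $\norm{v_t}\big(\mu\lambda_n - \norm{\lambda_t}\big) = 0$, which is precisely the scalar complementarity $0 \le \norm{v_t} \perp \mu\lambda_n - \norm{\lambda_t} \ge 0$. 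Substituting $\norm{\lambda_t} = \mu\lambda_n$ (valid on the branch $\norm{v_t} > 0$) into the antiparallel relation gives the vector identity $\norm{v_t}\lambda_t + \mu\lambda_n v_t = 0$; on the branch $\norm{v_t} = 0$ both lines of \eqref{eq:coulomb_kkt} hold trivially, as they do in the degenerate case $\lambda_n = 0$ (where the cone inequality forces $\lambda_t = 0$).

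Conversely, given \eqref{eq:coulomb_kkt} I would exhibit the witness $\beta := \norm{v_t}$: the two cone memberships are then immediate from the friction-cone inequality and from $\beta = \norm{v_t}$, and dotting the vector identity $\norm{v_t}\lambda_t + \mu\lambda_n v_t = 0$ with $v_t$ gives $\norm{v_t}\,\lambda_t^Tv_t = -\mu\lambda_n\norm{v_t}^2$, hence $\mu\beta\lambda_n + \lambda_t^Tv_t = \mu\norm{v_t}\lambda_n + \lambda_t^Tv_t = 0$ (dividing by $\norm{v_t}$ when positive, trivial otherwise), which is the required orthogonality. The hard part is not depth but bookkeeping: keeping the scalar-versus-vector abuse of $\lambda_n$ consistent, and — more substantively — handling the boundary cases $\lambda_n = 0$, $\norm{v_t} = 0$, and $\lambda = 0$ carefully so that the squeezing argument still produces an admissible $\beta$ and the equivalence holds as an ``there exists $\beta$'' statement rather than only in the generic regime. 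I would also flag explicitly that $v_t$ is tangential by definition, since the orthogonality $n(q)^Tv_t = 0$ is exactly what makes the cross terms in $x^Ty$ drop out.
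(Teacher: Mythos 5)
Your proposal is correct, but it follows a genuinely different route from the paper's proof. The paper argues by a stick/slip case split driven by the complementarity structure of the second-order cone: when $\mu\lambda_n - \|\lambda_t\| > 0$ the force lies in the interior of $\mathcal{C}_{n(q)}$, so the cone complementarity forces $\beta n(q) + v_t = 0$ and hence $\|v_t\| = 0$; when $\mu\lambda_n - \|\lambda_t\| = 0$ both elements must sit on the cone boundary and be anti-aligned, giving $\beta = \|v_t\|$ and $\lambda_t^T v_t + \|\lambda_t\|\,\|v_t\| = 0$, which is then matched to the slip branch of \eqref{eq:coulomb_kkt}. You instead split each cone membership into its normal and tangential parts, reduce the orthogonality in \eqref{eq:contact_coulomb} to the single scalar identity $\mu\beta\lambda_n + \lambda_t^T v_t = 0$, and run a Cauchy--Schwarz squeeze whose two endpoints coincide, so equality holds throughout and yields $\beta = \|v_t\|$, antiparallelism of $\lambda_t$ and $v_t$, and the scalar complementarity in one stroke; the converse is handled by exhibiting the witness $\beta := \|v_t\|$. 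What your route buys is precision at the edges: you treat $\beta$ explicitly as an existentially quantified auxiliary variable, and you handle the degenerate case $\lambda_n = 0$ (hence $\lambda_t = 0$) explicitly --- a case the paper's boundary argument glosses over, since when $\mu\lambda_n + \lambda_t = 0$ the complementarity does \emph{not} force $\beta n(q) + v_t$ onto the cone boundary, although the equivalence survives because neither formulation then constrains $v_t$. What the paper's route buys is brevity and a direct mapping onto the physical stick/slip interpretation. The only bookkeeping you should make explicit in a final write-up is the one you already flag: extracting $\beta = \|v_t\|$ and the Cauchy--Schwarz equality consequences from the squeeze requires $\mu\lambda_n > 0$, and for $\lambda_n = 0$ no information is extracted --- but none is needed, since both \eqref{eq:contact_coulomb} and \eqref{eq:coulomb_kkt} are then vacuous in $v_t$.
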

\begin{proof}
    Consider the case where $\mu\lambda_{n} - \| \lambda_{t} \| > 0$, which corresponds to the stick condition in \eqref{eq:coulomb_kkt}, and $\|v_t\|=0$ holds. 
    In \eqref{eq:contact_coulomb}, this case implies $\beta=0$ and consequently $\|v_t\|=0$, establishing equivalence.
    In the case of $\mu\lambda_{n} - \| \lambda_{t} \| = 0$, which aligns with the slip condition in \eqref{eq:coulomb_kkt}, the condition $\| v_t\|\lambda_{t} + \mu\lambda_{n} v_t = 0$ holds. In \eqref{eq:contact_coulomb}, this situation indicates that $\mu\lambda_n+\lambda_t$ resides on the boundary of $\mathcal{C}_{n(q)}$, and accordingly, $\beta n(q)+v_t$ must also lie on the boundary, where $\beta=\|v_t\|$. Consequently, $\lambda_t^Tv_t + \|\lambda_t\|\|v_t\|=0$, thus confirming their equivalence.
\end{proof}
Despite the equivalence of \eqref{eq:contact_coulomb} and \eqref{eq:coulomb_kkt}, we opt to utilize the proposed from \eqref{eq:contact_coulomb} in our framework to avoid explicit use of non-differentiable 2-norm $\| \cdot \|$ terms with a combination of conic constraints.

\subsection{Quasi-Dynamics}

To make planning results dynamically feasible, employing dynamics factor is also essential. 
In general, the relation can be formalized as follows:
\begin{align} \label{eq:factor_dyn}
A(q)\delta q - b(q) - J_u(q)^T u - J_c(q)^T\lambda = 0
\end{align} 
where $A(q)\in\mathbb{R}^{d\times d},~b(q)\in\mathbb{R}^d$ are the dynamics matrix/vector typically related to inertia and control gain, $J_u(q)$ is the input mapping matrix.
As we focus on manipulation problem in this work, we primarily adopt the quasi-static or quasi-dynamic assumption \cite{pang2023global} in \eqref{eq:factor_dyn}. 
While this may overlook certain dynamic effects, such as Coriolis forces, it generally leads to more well-defined inference process.

\subsection{Differentiable Contact Features} \label{subsec:geometry}

The aforementioned equations on contact constraints are composed of contact features such as gap $g(q)$, point $p(q)$, and normal $n(q)$.
As our objective is to develop an efficient \textit{gradient-based} inference scheme, it is essential to ensure the differentiability of factors.
To achieve this, we adopt a differentiable support function \cite{lee2023uncertain} as the geometry representation in our framework. This approach theoretically guarantee to provide differentiable contact features (DCF) and can represent arbitrary compositions of convex shapes. 
It is important to note that such differentiability of contact features between geometries is under-explored and has not been utilized enough even in recent literature \cite{pang2023global,le2024fast}, where shapes are often simplified to spheres or planes.

\subsection{Composition of Factors}

In manipulation planning, the graphical model should integrate the various constraint factors outlined earlier in a compositional manner. Often, we need to utilize a specific set of factors tailored to the task at hand, with each factor corresponding to a subset of variables. Below we present some examples of characterized sets of constraints for specific situations:
For static contact manipulation tasks like grasping or placement, the necessary factors may include:
\begin{align} \label{eq:set_static}
    &g(q) \ge 0
    \quad g(q)\lambda = 0
    \quad
\mu\lambda_n + \lambda_t
 \in \mathcal{C}_{n(q)}
\end{align}
with static equilibrium from \eqref{eq:factor_dyn}.
These functional constraints \eqref{eq:set_static} indicate the set of variables that satisfy contact complementarity and maintain stability with respect to rational contact forces.
Alternatively, if the objective is to maintain contact without slipping, as in pivoting scenarios, the following set of factors can be employed:
\begin{align} \label{eq:set_stick}
    g(q)=0 \quad v_t=0 \quad 
    \mu\lambda_n + \lambda_t
 \in \mathcal{C}_{n(q)}
\end{align}
with quasi-dynamics equation from \eqref{eq:factor_dyn}.
The above \eqref{eq:set_static} and \eqref{eq:set_stick} can be interpreted as distinct sets of contact factors. Similarly, various types of interactions can be modeled using a compositional arrangement of factors while the joint distribution on the variable nodes can be expressed as follows:
\begin{align} \label{eq:joint_distribution}
    p(X) = \prod_{i=1}^M p_i(X_i)~\text{where}~ p_i(X_i) \propto \text{exp}(-f_i)
\end{align}
where $M$ is the number of factor node, $X_i\subseteq X$ is the set of variable adjacent to $i$-th factor, $f_i$ is the energy function that encode the constraints. 
The form of the energy function depends on the type of constraints such as equality ($r_i=0$), inequality ($r_i\geq 0$), and cone constraints ($r_i\in\mathcal{C}$).
In this work, we assume the following forms of energy function $f_i$ for each type of constraint:
\begin{align} \label{eq:energy_function}
    \frac{1}{2}\| r_i \|^2, \quad 
    \frac{1}{2}\| \min(r_i,0) \|^2, \quad
    \frac{1}{2}\|d_\mathcal{C}(r_i) \|^2
\end{align}
which are for equality, inequality, and cone, respectively. Here, $d_\mathcal{C}$ is the L2 distance to the cone $\mathcal{C}$.
As mentioned earlier, each $r_i$ is readily differentiable. We also remark that our CFG formulation does not simply embed a forward simulator; instead, it provides the flexibility to model various types of interactions within physical reasoning.
Fig.~\ref{fig:overview} illustrates an example of CFG modeling.

\begin{figure*}[t]
  \centering
  \includegraphics[width=0.85\textwidth]{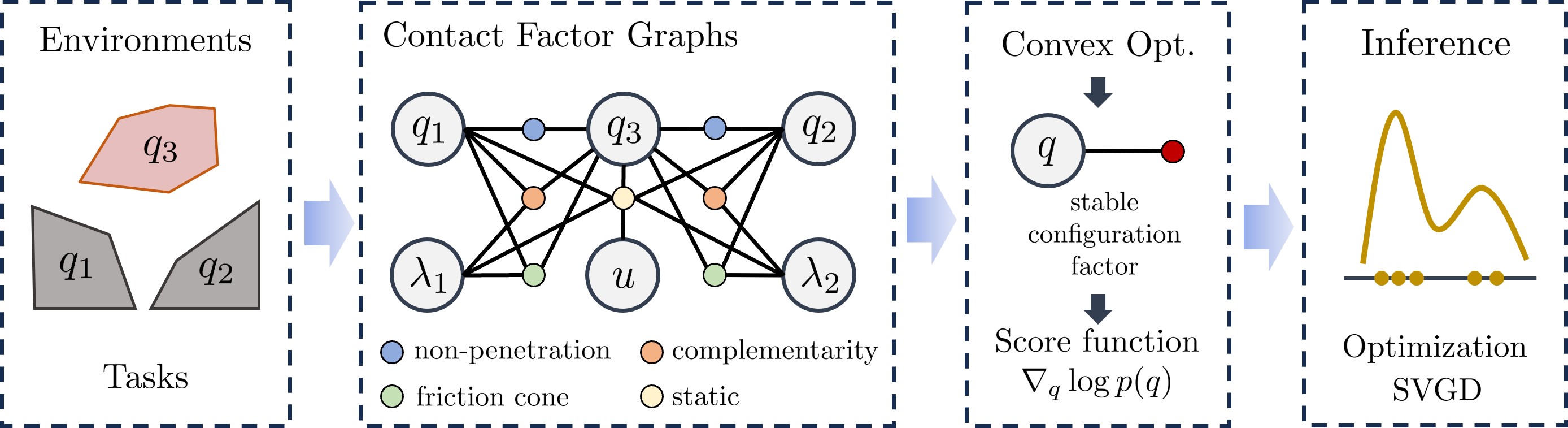} 
  \caption{Overview example of the proposed framework. Given the information on environment, contact factor graphs is composed using dynamics and contact factors relevant to the tasks. During the inference phase, convex optimization is employed to compress the distribution with respect to $q$. Score function is computed and applied within optimization or SVGD for inference.}
  \label{fig:overview}
\end{figure*}

\section{Inference in Contact Factor Graphs} \label{sec:solvingCFG}

Given the modeling described in Sec.~\ref{sec:modeling}, we need to perform inference on the joint distribution \eqref{eq:joint_distribution}.
Approximating such complex multimodal distributions presents a significant challenge in machine learning.
In this section, we propose an efficient Bayesian inference algorithm that exploits on the structure of the inner factorized likelihood function.

\subsection{MAP Inference on the Conditional Distribution}

Joint distribution on $X$ is typically high-dimensional, especially when the number of contact candidate increases. 
To alleiviate this issue, let us first consider the distribution on $u,\lambda$ conditioned by $q$.
Then we can perform MAP inference on the conditional distribution $p(u,\lambda \mid q)$ through following optimization: 
\begin{align} \label{eq:lower_convex}
    &\left( \lambda^*(q), u^*(q)\right) = \argmin_{\lambda,u} \sum_{i=1}^M f_i(\lambda,u,q). 
\end{align}
Our key insight is that for the factors \eqref{eq:contact_unilateral}, \eqref{eq:contact_coulomb} and \eqref{eq:factor_dyn}, and also their task-specific versions (e.g., \eqref{eq:set_static}), $u,\lambda$ exhibit linearity in $r_i$ given $q$. 
Then following lemma gives the convexity of the energy function form \eqref{eq:energy_function}:
\begin{lemma}
The energy functions defined in \eqref{eq:energy_function} are convex and have Lipschitz continuous gradient with respect to $r_i$.
\end{lemma}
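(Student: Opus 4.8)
The plan is to recognize that all three energy functions in \eqref{eq:energy_function} are, up to the factor $\tfrac{1}{2}$, \emph{squared} Euclidean distances to a nonempty closed convex set, and then to prove the statement once and for all in that generality. Writing $d_\mathcal{K}$ for the distance to such a set $\mathcal{K}$, we have $\tfrac{1}{2}\|r_i\|^2=\tfrac{1}{2}d_{\{0\}}(r_i)^2$; for the nonnegative orthant $\mathcal{K}_+$ the projection $\Pi_{\mathcal{K}_+}$ acts coordinatewise as $r\mapsto\max(r,0)$, so its residual is exactly $\min(r_i,0)$ and $\tfrac{1}{2}\|\min(r_i,0)\|^2=\tfrac{1}{2}d_{\mathcal{K}_+}(r_i)^2$; and $\tfrac{1}{2}\|d_\mathcal{C}(r_i)\|^2=\tfrac{1}{2}d_\mathcal{C}(r_i)^2$, where the set $\mathcal{C}=\mathcal{C}_{n(q)}$ is, for fixed $q$, a second-order cone, hence closed and convex. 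It therefore suffices to establish both claims for $h(r):=\tfrac{1}{2}d_\mathcal{K}(r)^2$ with $\mathcal{K}$ an arbitrary nonempty closed convex set.

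For convexity, I would write $h(r)=\inf_{y}\big(\tfrac{1}{2}\|r-y\|^2+\iota_\mathcal{K}(y)\big)$, where $\iota_\mathcal{K}$ is the indicator function of $\mathcal{K}$ (equal to $0$ on $\mathcal{K}$ and $+\infty$ otherwise); the expression inside the infimum is jointly convex in $(r,y)$, and partial minimization of a jointly convex function is convex, so $h$ is convex. For differentiability and the gradient formula, I would use that, since $\mathcal{K}$ is closed and convex, the projection $\Pi_\mathcal{K}$ is single-valued and the above infimum is attained uniquely at $y^*=\Pi_\mathcal{K}(r)$; an envelope-type argument, differentiating the smooth objective $\tfrac{1}{2}\|r-y^*\|^2$ and verifying via the variational inequality characterizing $\Pi_\mathcal{K}$ that the result is a genuine Fréchet gradient, yields $\nabla h(r)=r-\Pi_\mathcal{K}(r)$. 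Finally, Lipschitz continuity of $\nabla h$ follows from firm nonexpansiveness of the projection, $\langle\Pi_\mathcal{K}(r)-\Pi_\mathcal{K}(r'),\,r-r'\rangle\ge\|\Pi_\mathcal{K}(r)-\Pi_\mathcal{K}(r')\|^2$, which is immediate from the obtuse-angle characterization of projection onto a convex set and which implies that both $\Pi_\mathcal{K}$ and $I-\Pi_\mathcal{K}$ are nonexpansive; hence $\|\nabla h(r)-\nabla h(r')\|\le\|r-r'\|$. Equivalently, one may simply note that $h$ is the Moreau envelope (with parameter $1$) of $\iota_\mathcal{K}$, for which convexity, $C^1$-smoothness, and $1$-Lipschitz gradient are classical.

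I do not anticipate a serious obstacle; the one point that deserves care is that $d_\mathcal{C}$ itself is non-differentiable on $\partial\mathcal{C}$, so convexity and smoothness must be argued for the squared distance directly rather than inherited naively from $d_\mathcal{C}$ — the squaring is precisely what removes the kink at the cone boundary. It is also worth confirming that each set in question (the origin, the orthant, and the second-order cone $\mathcal{C}_{n(q)}$) is genuinely closed and convex so that $\Pi_\mathcal{K}$ is well-defined; for the Coulomb factor this holds because $q$, and hence the cone axis $n(q)$, is held fixed when differentiating with respect to $r_i$. Composing these energies with the affine dependence of each $r_i$ on $(\lambda,u)$ for fixed $q$ then preserves convexity and Lipschitz continuity of the gradient, which is what is used for the inference in \eqref{eq:lower_convex}.
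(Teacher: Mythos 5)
Your proof is correct, and it takes a genuinely different route from the paper. The paper's proof is essentially a citation: it declares the equality and inequality cases trivial and defers the only nontrivial case (the conic term) to Lemma~2 of the SAP paper by Castro et al. You instead give a unified, self-contained argument by observing that all three energies are instances of $\tfrac{1}{2}d_\mathcal{K}(\cdot)^2$ for a nonempty closed convex set $\mathcal{K}$ (the origin, the nonnegative orthant, and the second-order cone $\mathcal{C}_{n(q)}$ with $q$ fixed), i.e.\ the Moreau envelope of $\iota_\mathcal{K}$: convexity follows from partial minimization of a jointly convex function, the gradient is the projection residual $r-\Pi_\mathcal{K}(r)$, and $1$-Lipschitz continuity follows from firm nonexpansiveness of the projection. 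What your approach buys is independence from the external reference, a single argument covering all three constraint types (including any future closed convex constraint set one might add to the factor list), and an explicit Lipschitz constant; what the paper's approach buys is brevity and a direct pointer to the result in the exact form used by the SAP solver it then invokes. Your closing remarks --- that the squaring removes the kink of $d_\mathcal{C}$ on the cone boundary, and that convexity in $(\lambda,u)$ is then inherited through the affine dependence of $r_i$ for fixed $q$ --- are exactly the points the paper relies on implicitly when asserting that \eqref{eq:lower_convex} is convex, so they are a welcome addition rather than a digression.
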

\begin{proof}
For inequality and equality types, the properties are trivial. For cone types, we refer to Lemma 2 of \cite{castro2022unconstrained} for proof.
\end{proof}
Consequently, the optimization problem described in \eqref{eq:lower_convex} is convex, enabling us to perform MAP inference on the conditional distribution $p(u,\lambda \mid q)$ in an efficient manner, on a global scale.
As shown in Lemma 1, the gradient of this function possesses Lipschitz continuity, from which we can derive the Hessian matrix. For the Newton step, we can employ the semi-analytic primal (SAP) solver presented in \cite{castro2022unconstrained}, which utilizes exact linesearch and is proven to achieve superlinear convergence. 
Although this algorithm was originally developed for physics simulations, its theoretical properties are all preserved for \eqref{eq:energy_function}, allows it to be seamlessly adapted to our process.
We observe that this certifiable convex optimization provides significant convergence advantages in the inference process, as shown in Sec.~\ref{sec:experiment}. 
Additionally, our scheme can dramatically reduce the number of expensive contact feature computation function calls, which are usually numerical processes themselves.

\subsection{Conditional Independence and Factorization}

Aforementioned Newton steps for solving \eqref{eq:lower_convex} directly requires the factorization of the Hessian matrix.
Here, the structure of the matrix corresponds to the configuration of $p(u,\lambda \mid q)$, which we can exploit during the factorization process. 
For instance, independence arises between $u$ and $\lambda$ acting at different time intervals due to the Markov property of physics (i.e., $q$ at any point in time only affects $q, u, \lambda$ at the immediately preceding point in time).
Generally, the dependency between variables in conditional distribution is determined by the quasi-dynamics factors \eqref{eq:factor_dyn}, particularly through the structure of the Jacobians. In our framework, we effectively leverage this sparsity pattern during the factorization of the Hessian, which can be equivalently interpreted as an elimination process on the factor graph \cite{dellaert2021factor}.

\subsection{Inference on $q$} \label{subsec:q_inference}

Preceding MAP inference on the conditional distribution identifies the most plausible estimates $\lambda$ and $u$ for a given $q$. Building on this, we redefine the inference problem over the distribution of $q$ as follows:
\begin{align} \label{eq:q_distribution}
    p(q) \propto \prod_{i=1}^M \text{exp}(-f_i(\lambda^*(q),u^*(q),q)).
\end{align}
This distribution \eqref{eq:q_distribution} is not exactly a marginalization of the original distribution; rather, it represents the conditionally optimal distribution of $q$ from the MAP estimates $\lambda^*(q)$ and $u^*(q)$. 
Yet, this conditioned distribution remains multimodal and is intractable to compute directly. We should not merely assessing $q$ in isolation but considering how $q$ behaves when optimized estimates $\lambda^*(q)$ and $u^*(q)$ are factored into the evaluation.

\subsubsection{Score Function via Envelope Theorem}
To perform inference on the distribution given in \eqref{eq:q_distribution}, we can adopt various optimization, MCMC or variational inference algorithms. 
Here, the score function can play an essential role in the process from high-dimensional or constrained distributions. 
Within our framework, given that the MAP inference (e.g., \eqref{eq:lower_convex}) represents a parameterized convex optimization with respect to $q$, we propose to derive the score function for \eqref{eq:q_distribution} by employing the envelope theorem \cite{carter2001foundations}. 
Accordingly, we get
\begin{align} \label{eq:score_function}
    \nabla_q \log p(q) 
    = \sum_{i=1}^M \frac{\partial f_i(\lambda^*,u^*,q)}{\partial q}.
\end{align}
Note that the right hand side is composed of \textit{partial} derivative of $f_i$ with respect to $q$, therefore does not require any derivatives of $\lambda^*$ and $u^*$ with respect to $q$. 
It is worth noting that computing this derivative using implicit differentiation typically necessitates matrix inversion of problem-size matrices or a tailored design of the solver \cite{agrawal2019differentiating}, which can degrade the efficiency of the entire framework.
In the supplementary material, we summarize the results of specific derivation of partial derivatives for diverse contact factors.

\subsubsection{MAP Inference}

\begin{algorithm} [t] 
\SetAlgoLined
\caption{MAP Inference in CFG} 
\label{alg:cfg_mapinference}
\textbf{Input}: $\mathcal{G}$ \\
Initialize $l=0, q^0$, inverse Hessian $B^0$ \\
\While{not converged}{
Compute DCF for $q^l$ \\
Solve \eqref{eq:lower_convex} under $q^l$ using SAP solver \\
Compute score function $\nabla \log p(q^{l})$ \eqref{eq:score_function} \\
Compute $\Delta q^l = -B^l\nabla \log p(q^{l})$ \\
Compute $\alpha \leftarrow$ Line-Search($q^l,\Delta q^l$) \\
Update $q^{l+1}\leftarrow q^{l} + \alpha \Delta q^l$ \\
Update inverse Hessian $B^{l+1}$ via BFGS \\
$l\leftarrow l+1$
}
\textbf{Output:} $X$
\end{algorithm}

Following \eqref{eq:q_distribution}, conducting MAP inference in a CFG essentially entails solving the following optimization problem: 
\begin{align} \label{eq:map_inference}
    \min_q \sum_{i=1}^M f_i(\lambda^*(q),u^*(q),q)
\end{align}
Given that \eqref{eq:score_function} provides the gradient efficiently, using specific gradient-descent methods can facilitate the solution of \eqref{eq:map_inference}. 
Among the various algorithms available, we have chosen to use the Broyden–Fletcher–Goldfarb–Shanno (BFGS) algorithm \cite{fletcher2000practical}. This quasi-Newton method is particularly effective because it reliably converges by estimating the inverse Hessian matrix at each step, using only the gradient information provided. Alg.~\ref{alg:cfg_mapinference} summarizes the complete MAP inference process in CFG.

\subsubsection{Variational Inference}

\begin{algorithm} [t] 
\SetAlgoLined
\caption{Variational Inference in CFG} 
\label{alg:cfg_vinference}
\textbf{Input}: $\mathcal{G}$, kernel $\mathcal{K}$ \\
Initialize $l=0$, learning rate $\epsilon$, particle $\left\{ q^j \right\}_{j=1}^{S}$ \\
\While{not converged}
{
\For{$j=1:S$}
{
Compute DCF for $q^{j,l}$\\
Solve \eqref{eq:lower_convex} under $q^{j,l}$ using SAP solver \\
Compute kernel function $\mathcal{K}(q^{j,l},q)$ \\
Compute score function $\nabla \log p(q^{j,l})$ \eqref{eq:score_function} \\
Update particles \eqref{eq:svgd}
}
$l\leftarrow l+1$
}
\textbf{Output:} $X$
\end{algorithm}

The MAP inference in Alg.~\ref{alg:cfg_mapinference} typically yields only a single solution and thus cannot capture the multi-modality of the probabilistic distribution \eqref{eq:q_distribution}. 
Additionally, the solution obtained may be a local optimum and may not be sufficiently reasonable. Many problems involving contacts in robotics have multi-modal solutions, making it necessary to capture this variability.
In this context, variational inference on CFG can provide an overall approximation of the distribution, which can be formulated as
\begin{align} \label{eq:vi_inference}
    \hat{p}^*(q) = \argmin_{\hat{p}} D_{\text{KL}}(\hat{p}(q) \parallel p(q))
\end{align}
where $D_{\text{KL}}$ represents the Kullback-Leibler divergence. 

To address \eqref{eq:vi_inference}, we utilize Stein Variational Gradient Descent (SVGD \cite{liu2016stein}). In SVGD, the distribution $\hat{p}(q)$ is modeled non-parametrically using a set of particles. At each iteration, these particles are updated to minimize the divergence from the true distribution, utilizing techniques from reproducing kernel Hilbert space. The update process for each particle $q$ is as follows:
\begin{align} \label{eq:svgd}
\begin{split}
    &q^j \leftarrow q^j + \epsilon \eta(q^j) \\
    &\eta(q) = \sum_{j=1}^S \left[ \mathcal{K}(q^j,q) \nabla_{q^j} \log p(q^j) + \nabla_{q^j} \mathcal{K}(q^j,q) \right]
\end{split}
\end{align}
where $S$ is the number of particles, $\epsilon\in\mathbb{R}^+$ is a learning rate, $\mathcal{K}$ is a positive definite kernel function. 
Essentially, the process \eqref{eq:svgd} guides the particles to follow the score function, while a repulsive force prevents them collapse into a single mode.
As \eqref{eq:svgd} requires score function, this mechanism can be seamlessly integrated with the computation of \eqref{eq:score_function}.

Integrating with SVGD offers significant advantages, including the ability to perform operations in parallel across all particles and generally superior particle efficiency compared to MCMC-based sampling algorithms. The detailed variational inference process is presented in Alg.~\ref{alg:cfg_vinference}. Additionally, the overall framework is illustrated in Fig.~\ref{fig:overview}.

\section{Illustrative Examples} \label{sec:experiment}

In this section, we present various illustrative examples of manipulation planning using CFG.  
For the implementation, we primarily use MATLAB and C++, with \texttt{fminunc} and custom BFGS module. The planning results are validated through physics simulation \cite{lee2023modular}.

\subsection{Stable Placement}

\begin{figure}[t]
\centering
 \begin{subfigure}{0.2\textwidth}
    \includegraphics[width=\textwidth]{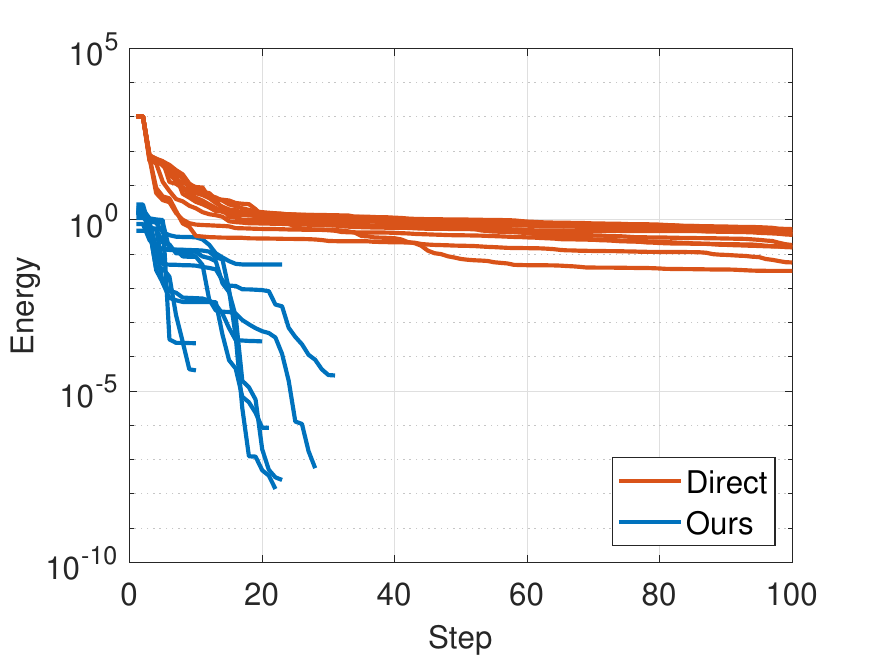}
  \end{subfigure}
  \begin{subfigure}{0.2\textwidth}
    \includegraphics[width=\textwidth]{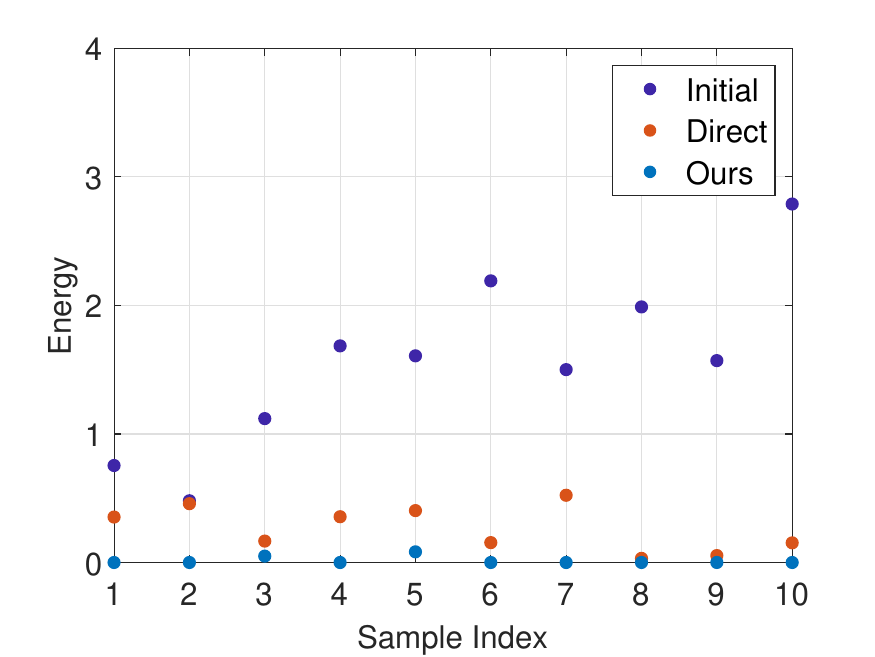}
  \end{subfigure}
  \caption{Comparison our MAP inference algorithm and direct inference on the joint distribution. Left: Convergence over iterations. Right: Quality of the final results.}
  \label{fig:stableplace_map}
\end{figure}

\begin{figure}[t]
\centering
 \begin{subfigure}{4cm}
    \includegraphics[width=\textwidth]{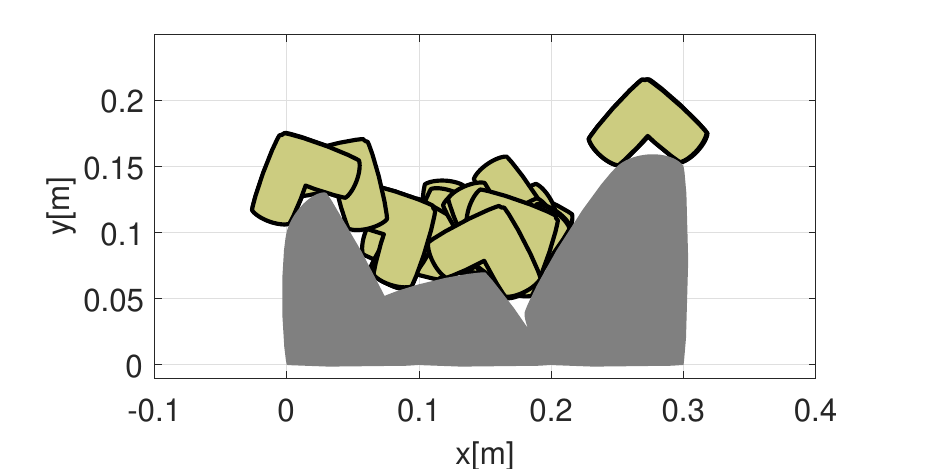}
  \end{subfigure}
  \begin{subfigure}{4cm}
    \includegraphics[width=\textwidth]{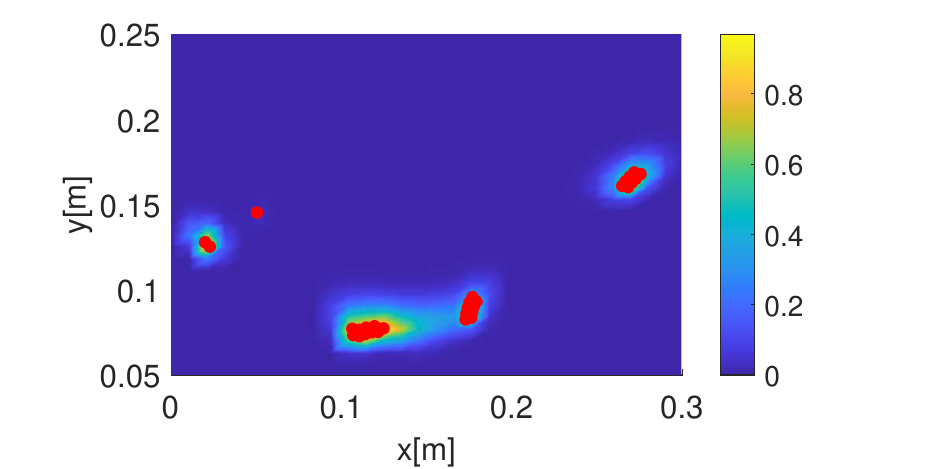}
  \end{subfigure}
  \caption{Examples of variational inference results in CFG. Left: Stable object poses sampled from the approximated distribution. Right: Visualization of the SVGD results under a fixed rotation.}
  \label{fig:stableplace_vi}
\end{figure}

In our first experimental scenario, we tackle the problem of inferring stable placement poses for a given object within a 2D environment. 
To address this problem, we apply the principles specified in \eqref{eq:set_static} within the CFG with equilibrium constraints under various external inputs $u$.
Note that we do not pre-specify contact pairs; instead, our CFG search navigates the space of making and breaking contacts.

We compare our inference scheme (Alg.~\ref{alg:cfg_mapinference}) with the baseline approach that performs inference directly on the joint distribution \eqref{eq:joint_distribution} by optimization (we refer to this as \texttt{direct} hereafter).
As shown in Fig.~\ref{fig:stableplace_map}, our method exhibits significantly faster convergence. For 10 randomly initialized samples, all of them successfully converge: 2 to (undesirable) local minima and 8 to proper stable poses. 
In the \texttt{direct} method, most of the samples fail to converge properly, and only $1$ of them succeed in finding a stable pose.

We also apply variational inference (Alg.~\ref{alg:cfg_vinference}) to approximate distribution of stable pose. 
For the kernel function, we use a common radial basis function.
The results depicted in Fig.~\ref{fig:stableplace_vi}, demonstrate that CFG modeling combined with gradient-based SVGD successfully generates particles that approximate the distribution, enabling us to sample diverse and high-quality poses.

\subsection{Pivoting}

\begin{figure}[t]
\centering
 \begin{subfigure}{0.15\textwidth}
    \includegraphics[width=\textwidth]{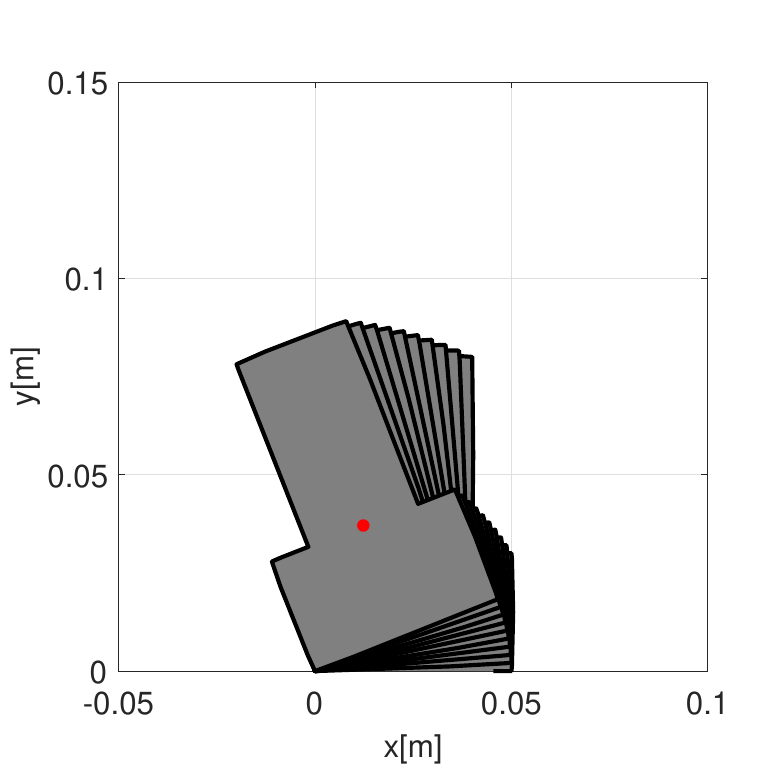}
  \end{subfigure}
  \begin{subfigure}{0.15\textwidth}
    \includegraphics[width=\textwidth]{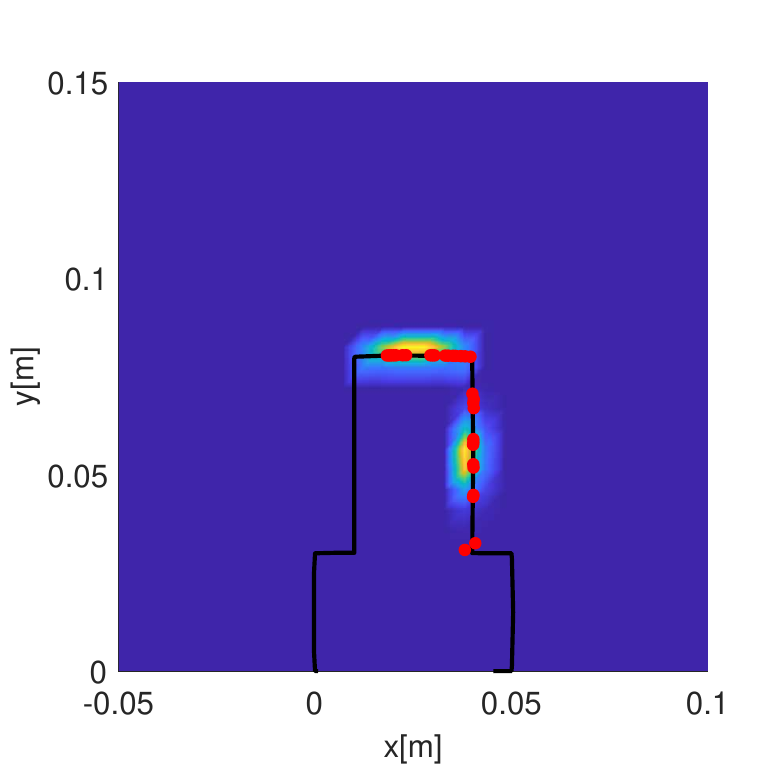}
  \end{subfigure}
  \begin{subfigure}{0.15\textwidth}
    \includegraphics[width=\textwidth]{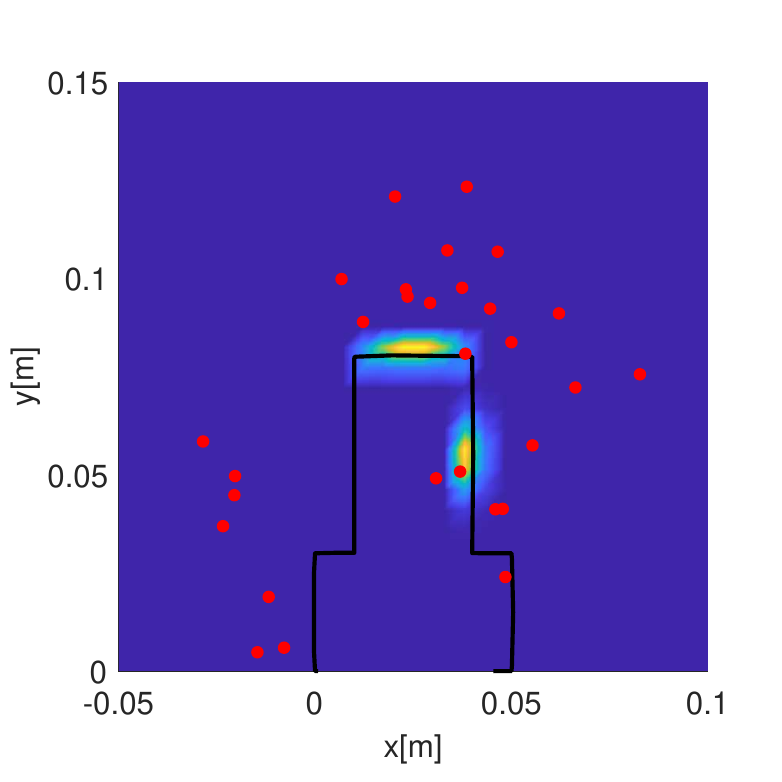}
  \end{subfigure}
  \caption{Left: Motion of pivoting manipulation. Middle: Visualization of SVGD results within our framework. Right: Visualization of ensemble MCMC results.}
  \label{fig:pivot_results}
\end{figure}

Next, we consider the pivoting task, which involves rotating an object while maintaining contact with both the floor and the manipulator. 
In CFG, we model the problem that reasons on the contact points (where the manipulator will push the object during the task) and the appropriate contact forces that achieve quasi-dynamics with the stick state \eqref{eq:set_stick} at each time step.  

Our results are depicted in Fig.~\ref{fig:pivot_results}. Given the clear multimodality of the possible pushing point, we test variational inference in CFG. 
As shown, the resulting particles from Alg.~\ref{alg:cfg_vinference} successfully capture the distribution. We also compared these results with ensemble MCMC \cite{foreman2013emcee}, a widely-used Bayesian inference method. Although the ensemble scheme offers advantages in exploration, the resulting samples fail to properly capture the distribution, possibly due to the complex contact landscape and non-reliance on gradient information.

\subsection{Nonprehensile Valve Turning}

\begin{figure}[t]
\centering
    \includegraphics[width=8cm]{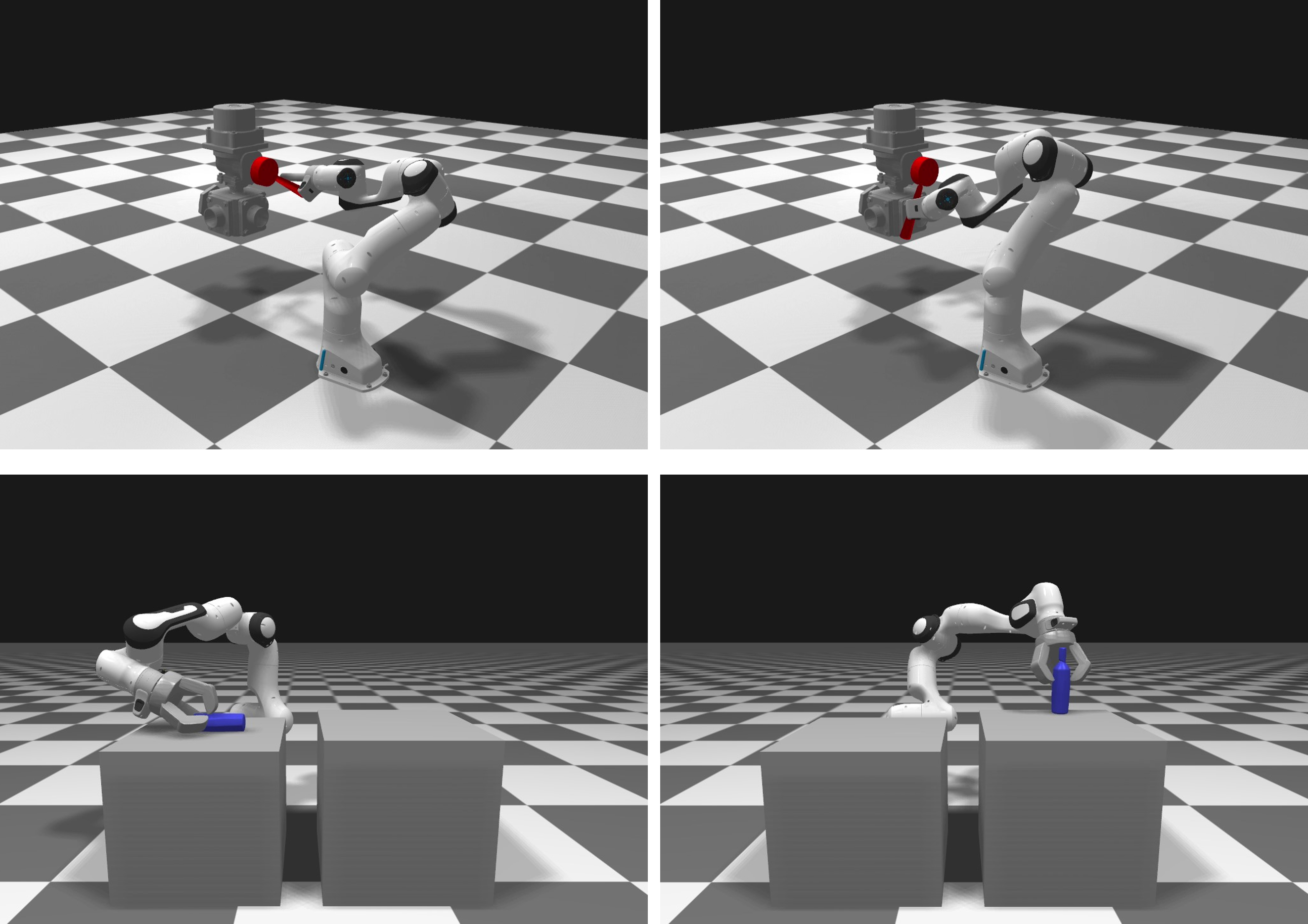}
    \caption{Snapshots of planning results generated through inferences in CFG. Top: valve turning with slide maneuvers. Bottom: multifinger grasping and placing.}
    \label{fig:3d_snapshots}
\end{figure}

Subsequently, we test a scenario in which the robot uses its end effector to turn a lever on a valve. Here, we give the critical constraint that the robot must maintain contact with the lever and turn it while avoiding reaching its joint limit. Therefore, it should effectively exploit sliding on the contact surface.
As illustrated in the compressed graph in Fig.~\ref{fig:cfg_seq}, we model the planning problem as a sequence of touch and turn actions. The touch factor is represented by $g(q) = 0$, while the turn factor incorporates $g(q) = 0$, the desired lever angle, joint limits, and the friction law \eqref{eq:contact_coulomb}.

Then as in integrated task and motion planning \cite{garrett2021integrated}, we can sequentially apply the CFG inference process in a factored manner to derive the complete solution. Initially, we sample the touching pose $q_0$, followed by the turning trajectory $q_1, \cdots, q_N$. 
Across $10$ different lever geometries, we successfully generate planning results in an average of $74~\rm{ms}$ (see Fig.~\ref{fig:3d_snapshots} for a snapshot) using Alg.~\ref{alg:cfg_mapinference}, whereas the \texttt{Direct} method failed to produce any successful samples.

\subsection{Multifinger Grasp and Place}

\begin{figure}[t]
\centering
\begin{subfigure}{0.225\textwidth}
    \includegraphics[width=\textwidth]{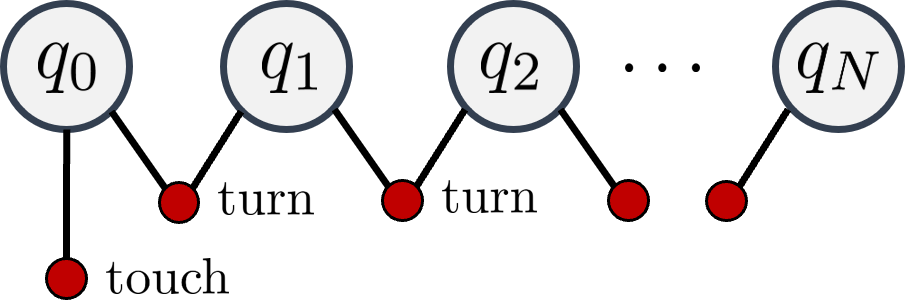}
    \caption{valve turn}
\end{subfigure}   
\begin{subfigure}{0.15\textwidth}
    \includegraphics[width=\textwidth]{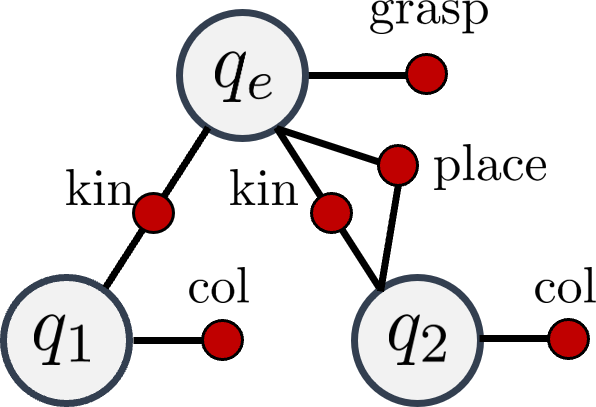}
    \caption{grasp place}
\end{subfigure}
\caption{Compressed form of CFG with respect to $q$ (as in Fig.~\ref{fig:overview}) for manipulation planning problems. For better efficiency, inference can be performed sequentially on each subgraph.}
\label{fig:cfg_seq}
\end{figure}

\begin{figure}[t]
\centering
 \begin{subfigure}{0.2\textwidth}
    \includegraphics[width=\textwidth]{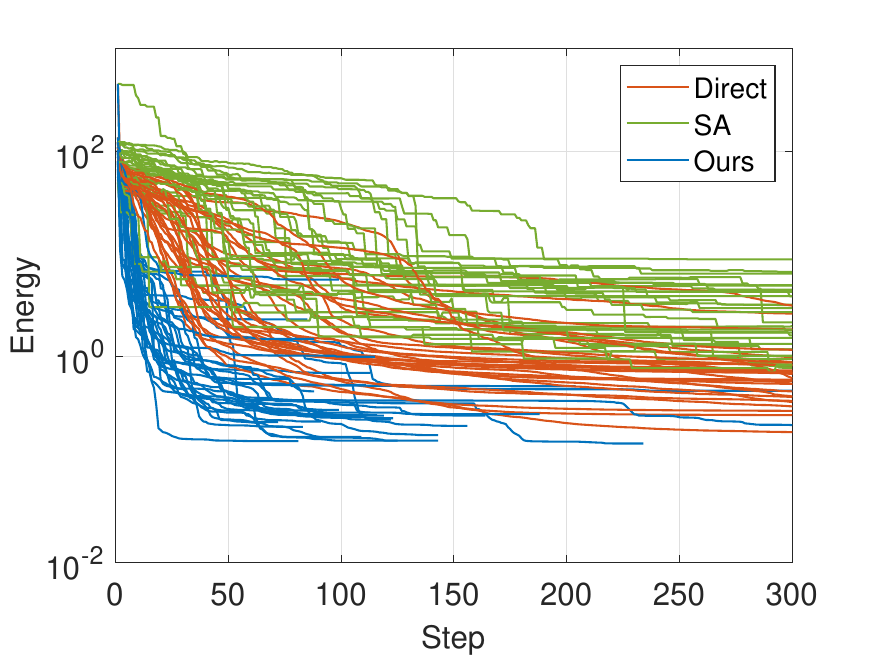}
  \end{subfigure}
  \begin{subfigure}{0.2\textwidth}
    \includegraphics[width=\textwidth]{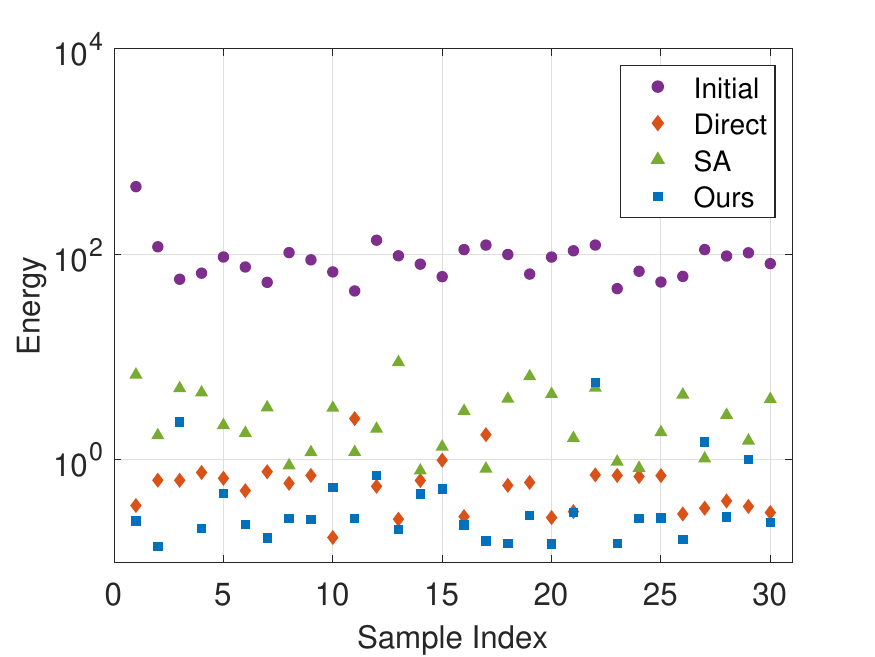}
  \end{subfigure}
  \caption{Comparison our MAP inference algorithm and direct inference on the joint distribution/simulation annealing. Left: Convergence over iterations. Right: Quality of the final results.}
  \label{fig:grasp_result}
\end{figure}

Finally, we tackle the multifinger grasp and place problem, which involves inferring proper grasping poses for a gripper and a given object, and then place it in a stable pose. 
Similar to the valve turning example, we can represent and solve the problem sequentially: first, find the gripper pose $q_e$ with manipulator pose $q_1$, then determine the placement pose $q_2$ as depicted in Fig.~\ref{fig:cfg_seq}. We include kinematic feasibility, collision avoidance, and the set of contact factors \eqref{eq:set_static} to naturally handle complementarity and stability, while eliminating the need to pre-specify the contact interface.

By aforementioned strategy, we successfully generate planning results in an average of $384~\rm{ms}$ (see Fig.~\ref{fig:3d_snapshots} for a snapshot).
To validate the sample generation performance, we compare $(q_e,q_1)$ sampling using Alg.~\ref{alg:cfg_mapinference}, with the \texttt{direct} method and also simulated annealing (SA), which performs global optimization through sampling without relying on gradient information. 
The results, depicted in Fig.~\ref{fig:grasp_result}, demonstrate that our methods achieve significantly faster convergence and yield higher-quality solutions. As other methods exhibit very low sampling success rates, the overall planning time slowed down by more than $10\times$.

\section{Discussion and Conclusion}

In this paper, we introduce the Contact Factor Graph (CFG) framework, which serves as a versatile tool for addressing a wide geometric-physical reasoning problems arise in manipulation planning. 
CFG facilitates the reasoning by modeling a differentiable, factorized probabilistic distribution aligned with contact mechanics and dynamics. 
We further present an inference algorithm for CFG, employing parameterized convex optimization techniques that leverage efficient gradient computation through the envelope theorem.

One limitation of the current framework is our assumption that the skeleton of manipulation planning is predefined.
As partially demonstrated in Sec.~\ref{sec:experiment}, we believe that various reasoning in CFG can be extended by sequential sampling on subgraphs, therefore can be combined to a range of works on task-level planning \cite{toussaint2015logic,migimatsu2020object,garrett2021integrated} or subgoal sampling \cite{pang2023global}, to significantly improve the tractability of the diverse array of complex manipulation. 
Also, our efficient and scalable solution sample generation scheme can also be used for data generation to build learning-based models, particularly for learning implicit distributions for diffusion processes \cite{yang2023compositional}.

\newpage
\bibliographystyle{unsrt}
\bibliography{reference}

\end{document}